\def\eqref#1{equation~\ref{#1}}
\def\1{\bm{1}}
\DeclareMathAlphabet{\mathsfit}{\encodingdefault}{\sfdefault}{m}{sl}
\SetMathAlphabet{\mathsfit}{bold}{\encodingdefault}{\sfdefault}{bx}{n}
\newtheorem{theorem}{Theorem}
\title{Kernel similarity matching with Hebbian neural networks}
\author{Kyle Luther \\
Princeton University \\
\texttt{kluther@princeton.edu}
\And
H. Sebastian Seung \\
Princeton University
}
\begin{document}

\maketitle
\begin{abstract}

    Recent works have derived neural networks with online correlation-based learning rules to perform \textit{kernel similarity matching}. These works applied existing linear similarity matching algorithms to nonlinear features generated with random Fourier methods. In this paper attempt to perform kernel similarity matching by directly learning the nonlinear features. Our algorithm proceeds by deriving and then minimizing an upper bound for the sum of squared errors between output and input kernel similarities. The construction of our upper bound leads to online correlation-based learning rules which can be implemented with a 1 layer recurrent neural network. In addition to generating high-dimensional linearly separable representations, we show that our upper bound naturally yields representations which are sparse and selective for specific input patterns. We compare the approximation quality of our method to neural random Fourier method and variants of the popular but non-biological ``Nystr{\"o}m'' method for approximating the kernel matrix. Our method appears to be comparable or better than randomly sampled Nystr{\"o}m methods when the outputs are relatively low dimensional (although still potentially higher dimensional than the inputs) but less faithful when the outputs are very high dimensional.
\end{abstract}

\section{Introduction}
Brain inspired learning algorithms have a long history in the field of neural networks and machine learning \citep{rosenblatt1958perceptron, olshausen1996emergence, lee99nmf, riesenhuber1999hierarchical, hinton2007boltzmann, lillicrap2016random}. While many algorithms have diverged from their biological roots, the motivation to study biology remains clear: the human brain is such a powerful learning agent, there must be insights to be gained by making our algorithms look ``brain-like''. This paper is focused on merging biological constraints with the well-established field of kernel-based machine learning.

A common assumption in brain-inspired models of learning is that synaptic update rules should be a) online, meaning the algorithm only has access to a single input pattern at a time and b) local, meaning synapses should only be modified using information immediately available to the synapse, often just the pre- and post-firing rates of the neurons to which it is connected. Learning rules with these properties are commonly referred to as Hebbian learning rules \citep{chklovskii2016search}. 

Recent works have devised neural networks with Hebbian learning rules that perform linear similarity matching. These networks map every input $\mathbf{x}_t$ to a representation $\mathbf{y}_t$ such that linear output similarities match linear input similarities $\mathbf{y}_s \cdot \mathbf{y}_{t} \approx \mathbf{x}_s \cdot \mathbf{x}_{t}$. These networks are interesting as models for real brains because they display a number of interesting biological properties: they are recurrent networks with correlation-based learning rules \citep{pehlevan2018why} and can be modified to include non-negativity \citep{pehlevan2014nonnegative}, sparsity, and convolutional structure \citep{obeid2019structured}.

However there is a problem if one believes these networks should ultimately generate representations which are useful for downstream tasks. If similarities are actually matched, that is if $\mathbf{y}_s \cdot \mathbf{y}_{t} = \mathbf{x}_s \cdot \mathbf{x}_{t}$, then the outputs are simply an orthogonal transformation of the inputs, $\mathbf{y}_{t} = \mathbf{Q} \mathbf{x}_{t}$, which is unlikely to have significant impact on downstream tasks. \citet{bahroun2017neural} identified this problem and proposed a solution: instead of matching linear input similarities, one can match nonlinear input similarities: $\mathbf{y}_s \cdot \mathbf{y}_{t} \approx f(\mathbf{x}_s, \mathbf{x}_{t})$. The authors provided a method that can be applied to any shift-invariant kernel. They applied the random Fourier feature method of \cite{rahimi2007random} to map inputs to nonlinear feature vectors $\mathbf{x} \rightarrow \boldsymbol{\psi}$ and then applied the linear similarity matching framework of \cite{pehlevan2018why} to these nonlinear features. 

In this paper, tackle the same neural kernel similarity matching problem with a different approach. Instead of using random nonlinear features, we directly optimize for the features with Hebbian learning rules that resemble the learning rules derived in the original works on linear similarity matching. To derive our learning rules, we show that for any kernel we can upper bound the sum of squared errors $|\mathbf{y}_s \cdot \mathbf{y}_{t} - f(\mathbf{x}_s, \mathbf{x}_{t})|^2$ with a correlation-based energy. Gradient-based optimization of our upper bound with will lead to a neural network with correlation-based learning rules. 


\section{correlation-based bound for kernel similarity matching}
\textbf{Roadmap for this section} We first define the kernel similarity matching problem (Eq. \ref{eqn:cmds}). We then derive a correlation-based optimization (Eq. \ref{eqn:upper_bound}) which is an upper bound for to Eq. \ref{eqn:cmds} (up to a constant that does not depend on the representations).  We then use a Legendre transform to derive an equivalent (except for the numerical stability parameter $\lambda)$ optimization problem in Eq. \ref{eqn:final_objective} that will lend itself towards online updates. 

\textbf{Kernel similarity matching} Assume we are given a set of input vectors $\{\mathbf{x}^t \in \mathbb{R}^M\}_{t=1}^T$ and a positive semi-definite kernel function $f: \mathbb{R}^M \times \mathbb{R}^M \rightarrow \mathbb{R}$ which defines the similarity between input vectors. The goal is to find a corresponding set of representations $\{\mathbf{y}^t \in \mathbb{R}^N\}_{t=1}^T$ such that for all pairs $(s,t)$ we have $\mathbf{y}^s \cdot \mathbf{y}^t \approx f(\mathbf{x}^s, \mathbf{x}^t)$. We will assume that $T \gg N>M$: the dimensionalities of $\mathbf{x}, \mathbf{y}$ are much lower than the number of samples $T$, but $\mathbf{y}$ are still higher dimensional than the inputs. This is formalized by minimizing the sum of squared errors:
\begin{equation}
    \min_{\{\mathbf{y}^t\}} \; \frac{1}{T^2} \sum_{s,t}^T \left[f(\mathbf{x}^s,\mathbf{x}^t) - \mathbf{y}^s \cdot \mathbf{y}^t\right]^2
    \label{eqn:cmds}
\end{equation}
This is known as the classical multidimensional scaling objective \citep{borg2005cmds}. For arbitrary nonlinearity $f$ this can be solved exactly by finding the top $N$ eigenvectors of the $T \times T$ input similarity matrix \citep{borg2005cmds}, and is therefore closely related to kernel PCA \citep{scholkopf1997kernel}. However, this requires computing and storing similarities for all pairs of input vectors which breaks the online constraint that we require for biological realism. The purpose of this paper is to find an online algorithm, with correlation-based computations, that can at least approximately minimize Eq. \ref{eqn:cmds}.

\textbf{Correlation based upper bound}  In this section we provide an upper bound to Eq. \ref{eqn:cmds} which does not require computing $f(\mathbf{x}_s,\mathbf{x}_t)$ for any $(s,t)$. The first step is to expand the square in Eq (\ref{eqn:cmds}) to yield: 
\begin{equation}
    \frac{1}{T^2} \sum_{s,t}^T \left[f(\mathbf{x}^s,\mathbf{x}^t) - \mathbf{y}^s \cdot \mathbf{y}^t\right]^2 = -\frac{2}{T^2} \sum_{s,t} f(\mathbf{x}^s,\mathbf{x}^t) \mathbf{y}^s \cdot \mathbf{y}^t + \frac{1}{T^2} \sum_{s,t} (\mathbf{y}^s \cdot \mathbf{y}^t)^2 + \text{const}
    \label{eqn:reformulation}
\end{equation}
We will now show how to bound the first term on the right hand side.
\begin{theorem}\label{eq:KeyInequality}
If $f$ is a positive semidefinite kernel function, then
\begin{equation}
    \frac{1}{2T^2} \sum_{s,t} y^s y^t f(\mathbf{x}^s, \mathbf{x}^t) \geq \frac{1}{T} \sum_t q y^t f(\mathbf{x}^t, \mathbf{w}) - \frac{1}{2} q^2 f(\mathbf{w}, \mathbf{w})
    \label{eqn:inequality}
\end{equation}
for all $q$ and $\mathbf{w}$.
\end{theorem}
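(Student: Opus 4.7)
The plan is to interpret both sides geometrically via the feature map associated with the positive semidefinite kernel $f$, and then recognize the claim as an instance of the elementary inequality $\tfrac12\|u\|^2 \ge \langle u,v\rangle - \tfrac12\|v\|^2$ (which is just $\tfrac12\|u-v\|^2 \ge 0$).

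More concretely, the first step is to invoke Mercer / reproducing kernel Hilbert space theory: since $f$ is positive semidefinite, there exists a Hilbert space $\mathcal{H}$ and a feature map $\phi:\mathbb{R}^M \to \mathcal{H}$ such that $f(\mathbf{x},\mathbf{x}') = \langle \phi(\mathbf{x}), \phi(\mathbf{x}')\rangle_{\mathcal{H}}$. Define
\[
  \mathbf{u} \;:=\; \frac{1}{T}\sum_{t} y^t \phi(\mathbf{x}^t) \in \mathcal{H}, \qquad \mathbf{v} \;:=\; q\,\phi(\mathbf{w}) \in \mathcal{H}.
\]
Then a direct expansion of inner products shows that the left-hand side of the inequality equals $\tfrac12\|\mathbf{u}\|_{\mathcal{H}}^2$, while the right-hand side equals $\langle \mathbf{u},\mathbf{v}\rangle_{\mathcal{H}} - \tfrac12\|\mathbf{v}\|_{\mathcal{H}}^2$.

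The second step is to apply the trivial identity
\[
  \tfrac12\|\mathbf{u}\|^2 - \langle \mathbf{u},\mathbf{v}\rangle + \tfrac12\|\mathbf{v}\|^2 \;=\; \tfrac12\|\mathbf{u}-\mathbf{v}\|^2 \;\ge\; 0,
\]
which rearranges exactly to the claimed inequality and holds for every $q \in \mathbb{R}$ and $\mathbf{w} \in \mathbb{R}^M$. Equivalently, one can recognize the bound as a Legendre duality statement: $\tfrac12\|\mathbf{u}\|^2 = \sup_{\mathbf{v}\in\mathcal{H}}\bigl[\langle \mathbf{u},\mathbf{v}\rangle - \tfrac12\|\mathbf{v}\|^2\bigr]$, so any single $\mathbf{v} = q\phi(\mathbf{w})$ produces a lower bound on $\tfrac12\|\mathbf{u}\|^2$.

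The only mildly delicate point is justifying the use of the feature map $\phi$ when $f$ is only assumed to be positive semidefinite (no continuity or compactness assumptions are stated). This is not really an obstacle: one can either cite Moore–Aronszajn to build the RKHS of $f$, or avoid $\mathcal{H}$ altogether by directly verifying that the Gram matrix $K_{st} = f(\mathbf{x}^s,\mathbf{x}^t)$ satisfies $\sum_{s,t} a_s a_t K_{st} \ge 0$ for the specific coefficients $a_t = y^t/T - q\,\mathbf{1}[\,\cdot = \mathbf{w}\,]$ (treating $\mathbf{w}$ as an extra index), which expands to precisely the desired inequality. Either formulation makes the proof a one-line consequence of positive semidefiniteness.
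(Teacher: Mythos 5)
Your proposal is correct and is essentially the same argument as the paper's: the paper likewise introduces feature vectors $\boldsymbol\phi_t$ realizing the kernel as inner products (its finite, at most $(T+1)$-dimensional construction is exactly the Gram-matrix factorization you mention as an alternative to invoking an RKHS) and then expands the non-negative squared norm of $\tfrac{1}{T}\sum_t y^t \boldsymbol\phi_t - q\,\boldsymbol\phi_{\mathbf{w}}$. Your added remarks on Moore--Aronszajn and the Legendre-duality reading are correct refinements but do not change the substance.
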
 
\begin{proof}[Proof]
Because $f$ is a positive semi-definite kernel, we can assign to any set of $M$-dimensional vectors $\{ \mathbf{w}, \mathbf{x}_1, \hdots, \mathbf{x}_T \}$, a corresponding set of (at most) $T+1$-dimensional vectors $\{ \boldsymbol\phi_{\mathbf{w}}, \boldsymbol\phi_1, \hdots, \boldsymbol\phi_T \}$ whose inner products yield the similarity defined by $f$:
\begin{equation}
    \boldsymbol\phi_t \cdot \boldsymbol\phi_{t'} = f(\mathbf{x}_t, \mathbf{x}_{t'}) \;\;\;\;\;\; \boldsymbol\phi_t \cdot \boldsymbol\phi_{\mathbf{w}} = f(\mathbf{x}_t, \mathbf{w}) \;\;\;\;\;\; \boldsymbol\phi_{\mathbf{w}} \cdot \boldsymbol\phi_{\mathbf{w}} = f(\mathbf{w}, \mathbf{w}) 
    \label{eqn:hilbert-vectors}
\end{equation}
Now consider the vector difference $\frac{1}{T} \sum_t y_t \boldsymbol\phi_t - q \boldsymbol\phi_w$. The squared norm of this difference is of course non-negative. Additionally we can expand out this square:
\begin{equation}
    0 \leq \frac{1}{2} \left\Vert \frac{1}{T} \sum_{t} y_t \boldsymbol\phi_t - q \boldsymbol\phi_{\mathbf{w}} \right\Vert^2 = \frac{1}{2T^2} \sum_{s,t} y_s y_t \boldsymbol\phi_s \cdot \boldsymbol\phi_{t} - \frac{1}{T} \sum_t q y_t \boldsymbol\phi_t \cdot \boldsymbol\phi_{\mathbf{w}} + \frac{1}{2} q^2 \boldsymbol\phi_{\mathbf{w}} \cdot \boldsymbol\phi_{\mathbf{w}}
    \label{eqn:hilbert-norm}
\end{equation}
At this point we can simply replace all dot products with the equivalent nonlinear similarities $f(\cdot, \cdot)$ in Equation \ref{eqn:hilbert-vectors} and rearrange the terms to yield our key inequality (Eq. \ref{eqn:inequality}).
\end{proof}
 
Our inequality still holds if we maximize the right hand side with respect to $q$ and $\mathbf{w}$. For every index $i$ of $\mathbf{y}$, we find the optimal $\mathbf{w}_i, q_i$, and then replace the first pairwise sum in Eq. (\ref{eqn:reformulation}) with our upper bounds.  Additionally we rearrange the order of the summations in second term on the right hand side of Eq. (\ref{eqn:reformulation}) to yield the following upper bound for the $y$-dependent terms in Eq. (\ref{eqn:reformulation}):
\begin{equation}
    \min_{\mathbf{y}^t} \min_{q_i, \mathbf{w}_i} - \frac{1}{T} \sum_{t=1}^T \sum_{i=1}^N \left[ q_i y^t_i f(\mathbf{w}_i, \mathbf{x}^t) - \frac{1}{2} q_i^2 f(\mathbf{w}_i,\mathbf{w}_i)\right] + \frac{1}{4} \sum_{i,j=1}^N \left(\frac{1}{T} \sum_{t=1}^T y^t_i y^t_j\right)^2
    \label{eqn:upper_bound}
\end{equation}
\textbf{Online focused reformulation} We can further remove the square of the correlation matrix $\frac{1}{T} \sum_t y^t_i y^t_j$ (another impediment to online learning) by introducing a Legendre transformation: $\frac{1}{2} C_{ij}^2 \rightarrow \max_{L_{ij}} C_{ij} L_{ij} - \frac{1}{2} L_{ij}^2$:
\begin{equation}
    \min_{\mathbf{W},\mathbf{Y}, \mathbf{q}} \max_{\mathbf{L}}  \frac{1}{T} \sum_{t=1}^T \left[-\sum_{i=1}^N \left[ q_i y^t_i f(\mathbf{w}_i, \mathbf{x}^t) - \frac{1}{2} q_i^2 f(\mathbf{w}_i,\mathbf{w}_i) \right] + \frac{1}{2} \sum_{i,j=1}^N \left[ L_{ij} y^t_i y^t_j - \frac{1}{2} L_{ij}^2 \right] \right]
\end{equation}
We can swap the order of the $y$ and $L$ optimizations, because the objective obeys the strong min-max property with  $\mathbf{W},\mathbf{q}$ fixed (Appendix Section A of \citet{pehlevan2018why}). We add one final term $ \frac{\lambda}{NT} \sum_{t=1}^T \sum_{i=1}^N (y^t_i)^2 $ to the objective, which can be important for numerical stability of our resulting algorithm. In our experiments $\lambda=0.001$. Finally, to better motivate our online algorithm, we define the ``per-sample-energy'':
\begin{equation}
    e^t := \sum_{i=1}^N -\left[ q_i y^t_i f(\mathbf{w}_i, \mathbf{x}^t) - \frac{1}{2} q_i^2 f(\mathbf{w}_i,\mathbf{w}_i) \right] + \frac{1}{2} \sum_{i,j=1}^N \left[ L_{ij} y^t_i y^t_j - \frac{1}{2} L_{ij}^2 \right] + \frac{\lambda}{2} \sum_{i=1}^N (y^t_i)^2
\end{equation}
where $e^t := e(\mathbf{y}^t, \mathbf{x}^t; \mathbf{W}, \mathbf{q},\mathbf{L})$. The final optimization we will perform, which is equivalent to the optimization in Eq. \ref{eqn:upper_bound}, and is derived as an upper bound to Eq. \ref{eqn:cmds}, is thus:
\begin{equation}
    \min_{\mathbf{W}, \mathbf{q}} \max_{\mathbf{L}} \min_{\mathbf{Y}} \; \frac{1}{T} \sum_{t=1}^T e(\mathbf{y}^t, \mathbf{x}^t; \mathbf{W}, \mathbf{q},\mathbf{L})
    \label{eqn:final_objective}
\end{equation}
\section{Neural network optimization}
\begin{figure}
    \centering
    \includegraphics[width=1.0\linewidth]{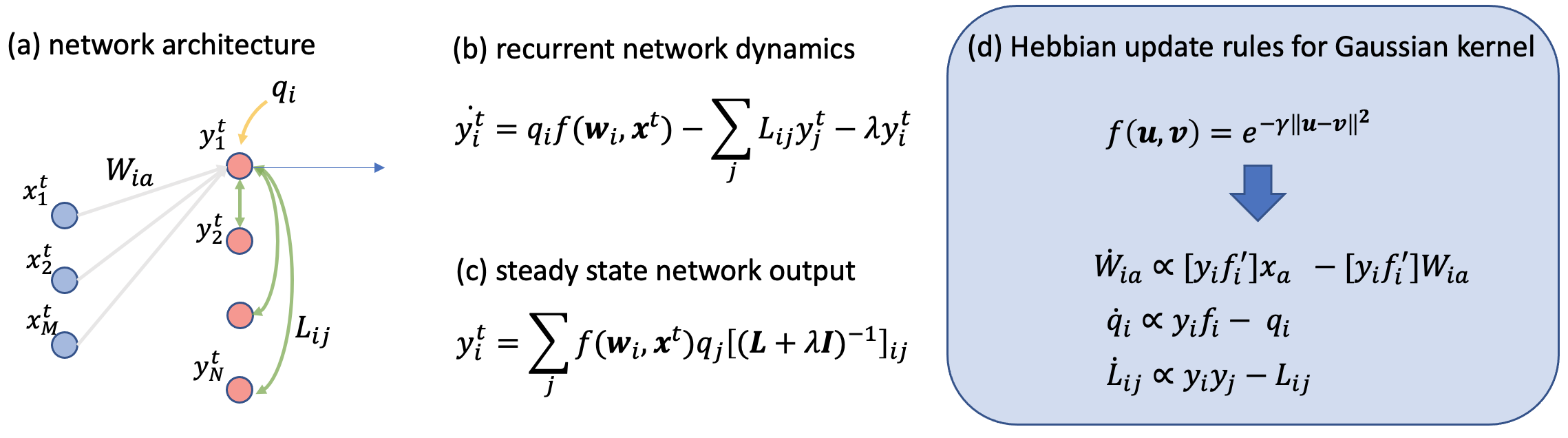}
    \caption{Neural network implementation of the optimization in Eq. \ref{eqn:final_objective} (a) network architecture (b) recurrent network dynamics (c) steady state network response (d) Hebbian update rules for the special case of Gaussian kernels (the precise form of these updates will be depend on the kernel)}
    \label{fig:network-overview}
\end{figure}

Applying a stochastic gradient descent-ascent algorithm to Eq. (\ref{eqn:final_objective}) yields a neural network (Fig. \ref{fig:network-overview}) in which $y^t_i$ is the response of neuron $i$ to input pattern $t$, $\mathbf{w}_i$ is the vector of incoming connections to neuron $i$ from the input layer, $q_i$ is a term which modulates the strength of these incoming connections, and $L_{ij}$ is a matrix of lateral recurrent connections between outputs.

Specifically the neural algorithm procedes as follows. We initialize $W_{ia} \leftarrow \mathcal{N}(0,1)$, $q_i \leftarrow 1$ and $L_{ij} \leftarrow \mathbf{I}_{ij}$. At each iteration sample a minibatch of inputs $\{\mathbf{x}^b\}$. Using Eq.\ref{eqn:network_response} we compute the $\{\mathbf{y}^b\}$ which minimize Eq. \ref{eqn:final_objective} for fixed synapses. Using these optimal $\{\mathbf{y}^b\}$, we compute the minibatch energy $e = \frac{1}{B} \sum_b e(\mathbf{x}^b,\mathbf{y}^b; \mathbf{W}, \mathbf{q},\mathbf{L})$ and take a gradient descent step for $\mathbf{q}$, a rescaled gradient descent step for $\mathbf{w}$ and a gradient ascent step for $\mathbf{L}$:
\begin{equation}
    \mathbf{w}_i \leftarrow \mathbf{w}_i - \frac{\eta_w}{q_i^2} \frac{\partial e}{\partial \mathbf{w}_i} \;\;\;\;\;\; q_i \leftarrow q_i - \eta_q \frac{\partial e}{\partial q_i} \;\;\;\;\;\; L_{ij} \leftarrow L_{ij} + \eta_l \frac{\partial e}{\partial L_{ij}}
\end{equation}

\textbf{Convergence of the neural algorithm} We treat convergence of this gradient descent ascent algorithm as an empirical issue. We adopt the ''two time scale'' strategy that has shown empirical successes for training generative adversarial networks \citep{heusel2017gans}. We choose the learning rates such that $\eta_q,\eta_w \ll \eta_l$. Intuitively when choosing $\eta_l$ to be large, the $L_{ij}$ can approximately maximize Eq. \ref{eqn:final_objective} for any particular $\mathbf{q},\mathbf{W}$ so that the min-max ordering is roughly preserved. In practice this ratio is important for convergence. We do not observe convergence when the ratios $\eta_w/ \eta_l$ or $\eta_q/\eta_l$ are large. Unfortunately it is an empirical question of what is ``too large''. If we could show that the objective were concave in $L$, it can be gradient descent ascent with smaller learning rates for $W,q$ would indeed converge to a saddle point \citep{lin2020gradient,seung2019convergence}. However, this question will have to be left for future work.

Empirically it is sometimes observed that $q_i$ quickly shrinks to a small value early in training, which subsequently leads to small gradients for $\mathbf{w}$. The rescaling of the $\mathbf{w}_i$ updates provides an adaptive learning rate that appeared to improve training times in practice. We have attached the main portion of the training code, written using PyTorch, in the appendix.

\subsection{ Network dynamics }
\label{sec:network-dynamics}
Assuming fixed parameters $\mathbf{q}, \mathbf{W}, \mathbf{L}$, the gradient for $\mathbf{y}$ can be computed for any input pattern $\mathbf{x}$. Gradient descent can be used to perform the inner loop minimization in Equation \ref{eqn:final_objective}:
\begin{equation}
    \dot{y}_i = \eta_y \left[q_i f(\mathbf{w}_i, \mathbf{x}) - \sum_{j=1}^N L_{ij} y_j - \lambda y_i \right]
\end{equation}
Like previous works on linear similarity matching, these dynamics can be interpreted as the dynamics of a one-layer recurrent neural network with all-to-all inhibition $\sum_{j=1}^N L_{ij} y_j$ between units. A diagram of this network is shown in Figure \ref{fig:network-overview}. Note that we can analytically perform the inner loop minimization with a non-neural algorithm:
\begin{equation}
    y_i \leftarrow \sum_j [\mathbf{L}+\lambda \mathbf{I}]_{ij}^{-1} q_j f(\mathbf{w}_j, \mathbf{x})
    \label{eqn:network_response}
\end{equation}
This is useful both conceptually and for speeding up the training process in our experiments. This formula shows us that $\mathbf{y}$ is a linear function of the non-linear feedforward input $f(\mathbf{w}_i, \mathbf{x}_t)$. This is different from \cite{seung2017correlation}, \cite{pehlevan2014nonnegative} where the the neurons are non-linear functions (due to non-negativity constraints) of linear feedforward input $\mathbf{w}_i \cdot \mathbf{x}_t$.

\subsection{Synaptic learning rules: arbitrary kernel}
In the previous section we saw how the $\mathbf{W}$ could be interpreted as feedforward synapses, $\mathbf{q}$ as feedforward regulatory terms, $\mathbf{L}$ as inhibitory synapses. Gradient descent on $\mathbf{W}, \mathbf{q}$ and gradient ascent on $\mathbf{L}$ provides an algorithm for performing the optimization in Equation \ref{eqn:final_objective}. At each step, we compute the optimal $\mathbf{y}$. For simplicity, we consider the case with a single input, in which case we drop the index $b$ on $\mathbf{x}^b,\mathbf{y}^b$. The stochastic gradients for $\mathbf{W}$ lead to the update: 
\begin{equation}
    \Delta \mathbf{w}_{i} \propto y_i \partial f(\mathbf{w}_i, \mathbf{x}) / \partial \mathbf{w}_i - q_i \partial f(\mathbf{w}_i, \mathbf{w}_i) / \partial \mathbf{w}_i 
    \label{eqn:dw}
\end{equation}
Classically Hebbian rules have been defined so that the update is linear in the input $\mathbf{x}$ (although they can be nonlinear in the output $\mathbf{y}$ which is a function of $\mathbf{x}$) (Eq. 1 of \citet{brito2016nonlinear}). This rule is more general as it is a nonlinear function of the input vector $\partial f(\mathbf{w}_i, \mathbf{x}) / \partial \mathbf{w}_i = h(\mathbf{x}, \mathbf{w}_i)$. However we note that the spirit of Hebb is still here as this is an online, local, correlation-based learning rule. 

The regulatory terms (essentially controlling the magnitude of the strength of feedforward input) can be updated with:
\begin{equation}
    \Delta q_{i} \propto y_i f(\mathbf{w}_i, \mathbf{x}) -  f(\mathbf{w}_i, \mathbf{w}_i) q_i 
    \label{eqn:dq}
\end{equation}
Here we have the correlation between the feedforward input and the neurons response. Finally gradients for the inhibitory synapses are:
\begin{equation}
    \Delta L_{ij} \propto y_i y_j - L_{ij}
    \label{eqn:dl}
\end{equation}
This is exactly the same ``anti-hebbian'' update seen in previous linear similarity matching works \cite{pehlevan2018why}. The inhibition grows in strength as the correlation between neurons grows. 

\subsection{Synaptic learning rules: radial basis function kernel}
Before moving on, we'll consider the form of the update rules in Eq. \ref{eqn:dw},\ref{eqn:dq} when the kernel is a radial basis function, i.e. when the kernel is a function of the Euclidean distance. For simplicity we'll also assume the kernel is normalized so that $f(\mathbf{v},\mathbf{v})=1$:
\begin{equation}
    f(\mathbf{u}, \mathbf{v}) := g(\Vert \mathbf{u} - \mathbf{v} \Vert) \;\; \text{ and } \;\; g(0) = 1
    \label{eqn:rbf-kernel}
\end{equation}
In this case we get the gradient updates for $\mathbf{w}_i,q_i$:
\begin{equation}
    \Delta \mathbf{w}_{i} \propto [y_i g'_i] \mathbf{x} - [y_i g'_i] \mathbf{w} \;\;\;\;  \Delta q_{i} \propto y_i g_i - q_i
    \label{eqn:rbf-updates}
\end{equation}
The update for $\mathbf{w}_i$ is proportional to the input $\mathbf{x}$, but modulated by the output response ($y_i$) and a function of the feedforward input ($g'_i$). The updates for $L_{ij}$ do not depend on the form of the kernel.

\section{Experiments}
We train networks using a Gaussian kernel for the half moons dataset and a ``power-cosine'' kernel (defined in section 4.2) for the MNIST dataset. We compare the approximation error (Eq. \ref{eqn:cmds}) of our method to the approximation error given by a) the optimal eigenvector-based solution (which we label as kernel PCA) b) Nystr{\"o}m approximation with uniformly sampled landmarks c) Nystr{\"o}m approximation using KMeans to generate the landmarks d) Nystr{\"o}m approximation using our generated features ($w_i$) as the landmarks and e) random Fourier feature method (This method is not applicable to the cosine-based kernel we use for the MNIST dataset). The ``dimensionality'' refers to the number of components for the PCA method, the number of landmarks for the Nystr{\"o}m methods, and the number of Fourier features for the Fourier method. See the appendix for more details regarding each of these 5 methods. Method (e) is the only other explicitly neural method.

\subsection{Half Moons Dataset}
\begin{figure}
    \centering
    \includegraphics[width=\linewidth]{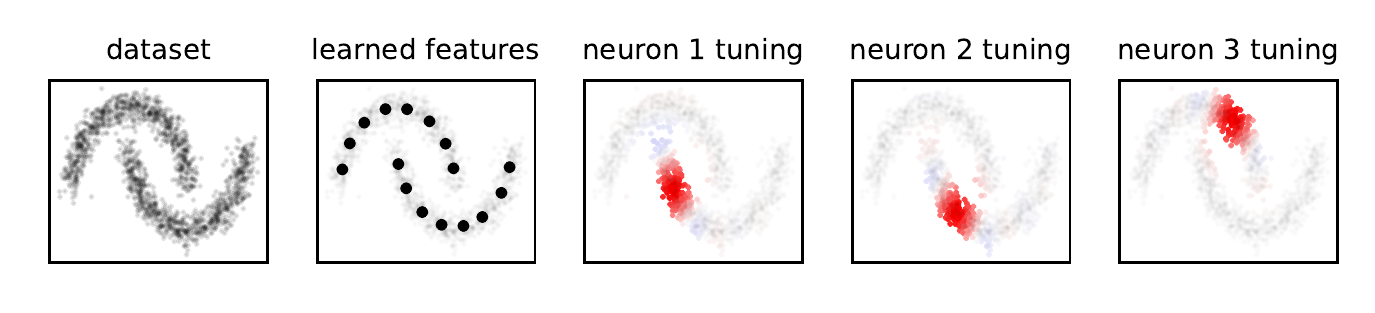}
    \caption{Overview of our Hebbian radial basis function network on the half moons dataset (a) dataset, (b) features $\{\mathbf{w}_i\}_{i=1}^{16}$ (c,d,e) response profiles of 3 neurons. }
    \label{fig:half-moons-overview}
\end{figure}
We train our algorithm on a simple half moons dataset, shown in Figure \ref{fig:half-moons-overview}. It consists of 1600 input vectors $\mathbf{x} = [x_1, x_2]$ drawn from a distribution of two noisy interleaving half circles. We use a Gaussian kernel with $\sigma=0.3$ to measure input similarities: $f(\mathbf{u}, \mathbf{v}) = e^{-\frac{\Vert \mathbf{u} - \mathbf{v} \Vert^2}{2\sigma^2}}$. We compare various number of neurons $n \in \{2,4,8,16,32,64\}$. See the appendix for training details.

\begin{figure}
    \centering
    \includegraphics[width=0.95\linewidth]{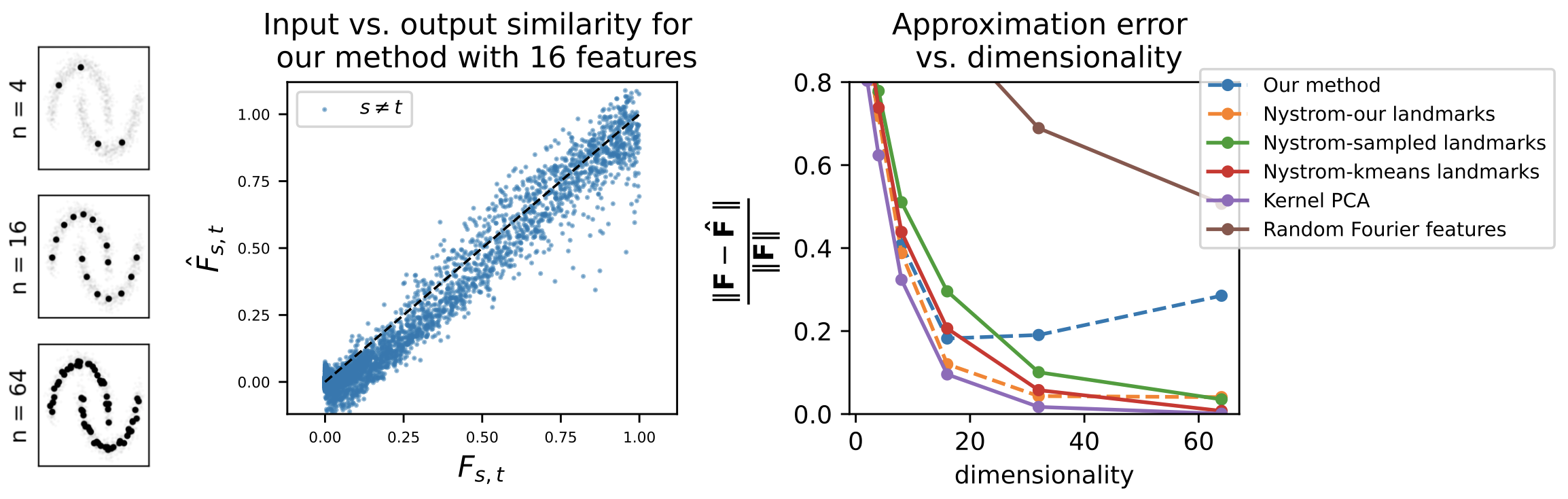}
    \caption{Approximation error vs. dimensionality for the half moons dataset (a) learned features for $n=4,16,64$ (b) input-output similarities for 16 dimensional networks (c) normalized root-mean-square error between true kernel matrix and various approximation methods. The neural method (dashed blue) that we derive in Section 3 performs well for $n <= 16$, but the approximation actually gets worse as we increase the dimensionality.}
    \label{fig:half-moons-matching}
\end{figure}

\textbf{Emergence of sparse, template-tuned neurons} In Fig. \ref{fig:half-moons-overview} we show the learned features $\{\mathbf{w}_i\}$ when we train our algorithm with 16 neurons. We observe that the features appear to tile the input space. We also show the tuning properties of 3 of the output neurons over the dataset. To generate these figures, we color code each sample in the dataset with the response of neuron $y_i$. Gray indicates zero response, red indicates a positive response and blue indicates a negative response. We observe that neurons appear to respond with large positive values centered around a small localized region of the input dataset. The features closely resemble the cluster centers return by KMeans.

\textbf{Kernel approximation error} In panel (a) of Fig. \ref{fig:half-moons-matching} we show the learned features for $n=\{4,16,64\}$. In panel (b) we plot the input similarities vs. output similarities generated by our neural algorithm with 16 dimensional outputs. In panel (c), we plot the normalized mean squared error for our method compared to the neural random Fourier method of \cite{bahroun2017neural}, non-neural Nystr{\"o}m methods, and non-neural but optimal kernel PCA method. 

We observe that for small dimensionality ($n \leq 16$) our method actually seems to marginally outperform the Nystr{\"o}m+KMeans method, which outperforms the Nystr{\"o}m+randomly sampled landmarks method. Additionally, using the Nystr{\"o}m approximation with our features seems to be uniformly better than the representations we generate with the neural net. Essentially, our algorithm leanrs useful landmarks, but for most faithful representation, it is better to just throw away the neural responses and simply use the Nystr{\"o}m approximation with our landmarks. It is worth mentioning that as you increase the dimensionality higher, the Random Rourier method ultimately does converge to zero error, unlike our method.

\textbf{Utility of representations evaluated by KMeans clustering} In Fig. \ref{fig:half-moons-clustering} we visualize the principle components of the inputs $\mathbf{x}$ and 16D representations $\mathbf{y}$. Of course, the principle components of $\mathbf{x}$ are not too interesting, they are just a reflected version of the original 2D dataset. The top two components of $\mathbf{y}$ appear to be more linearly separable than the inputs and indicate that a strong nonlinear transformation has occured. Additionally, we run KMeans on $\mathbf{x}$ and on $\mathbf{y}$ (we use the implementation of scikit-learn, and take the lowest energy solution using 100 inits). We observe that the clustering yields the nearly perfect labels when performed on $\mathbf{y}$. The kernel similarity matching vectors appear to be better suited for downstream learning tasks than the original inputs.
\begin{figure}
    \centering
    \includegraphics[width=\linewidth]{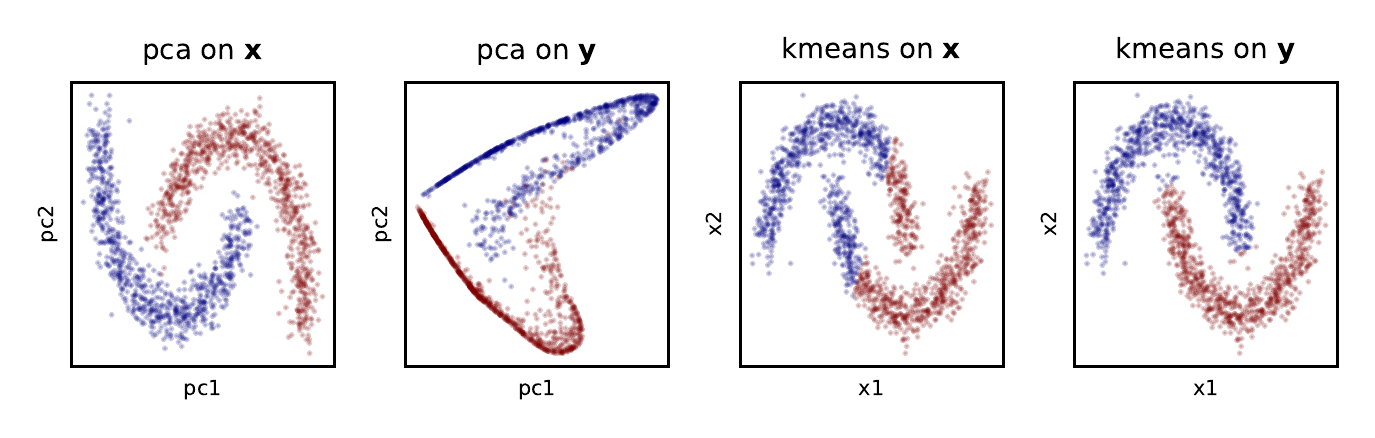}
    \caption{Utility of kernel similarity matching for downstream tasks. (a) principle components of the input vectors $\mathbf{x}$ (b) principle components of the 16 dimensional neural representations $\mathbf{y}$ (c) clustering generated by kmeans on $\mathbf{x}$ (d) clustering generated by kmeans on $\mathbf{y}$. For (a,b) the the colors are given by ground truth labels while in (c,d) the colors are given by the KMeans clustering. }
    \label{fig:half-moons-clustering}
\end{figure}

\subsection{MNIST Dataset}
We train our algorithm on the MNIST handwritten digits dataset \cite{lecun2010mnist}. The dataset consists of 70,000 images of 28x28 handwritten digits, which we cropped by 4 pixels on each side to yield 20x20 images (which become 400 dimensional inputs). We use kernels of the form $f(\mathbf{u}, \mathbf{v}) = \Vert \mathbf{u} \Vert \Vert \mathbf{v} \Vert (\hat{\mathbf{u}} \cdot \hat{\mathbf{v}})^\alpha$ and varying number of neurons. Training details are provided in the appendix.

The linear kernel is recovered by setting $\alpha=1$. We are not aware of other works using this exact ``power-cosine'' kernel before, however it is motivated by the \textit{arccosine kernel} studied in the context of wide random ReLU networks \citep{cho2009kernel}. An important property of our kernel network is the linear input-output scaling, meaning that rescaling an input $\mathbf{x}' \leftarrow a \mathbf{x}$ will cause the corresponding representation to also be rescaled by the same factor  $\mathbf{y}' \leftarrow a \mathbf{y}$. This will allow our nonlinear networks to have the same ``contrast-invariant-tuning'' properties that are thought to be displayed by simple cells in cat visual cortex \citep{skottun1987contrast}. 

\begin{figure}
    \centering
    \includegraphics[width=0.95\linewidth]{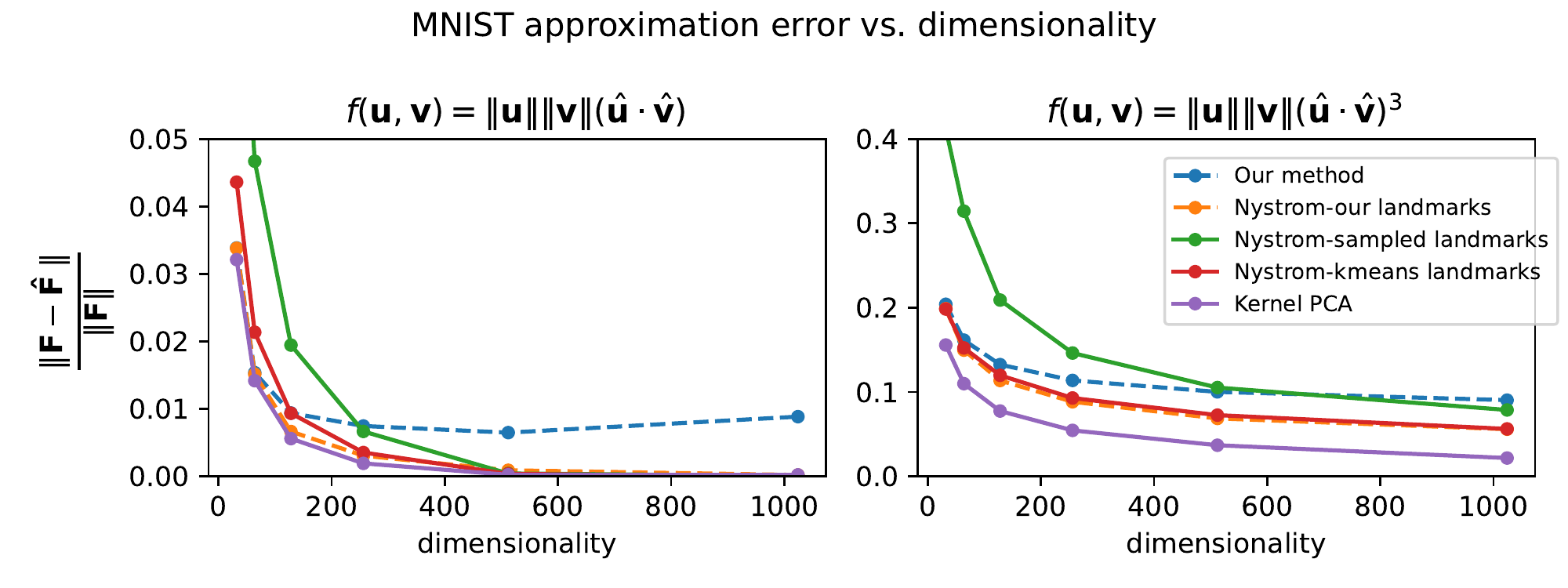}
    \caption{Approximation error vs. dimensionality for the MNIST dataset. (a) $f(\mathbf{u}, \mathbf{v}) = \mathbf{u} \cdot \mathbf{v}$  (linear kernel) (b) $f(\mathbf{u}, \mathbf{v}) = \Vert \mathbf{u} \Vert \Vert \mathbf{v} \Vert (\hat{\mathbf{u}} \cdot \hat{\mathbf{v}})^3$ (a nonlinear kernel). For the linear kernel all methods give relatively small approximation error once $n > 100$. Although yet again we see that the neural method does not continue to decrease as the dimensionality increases beyond 200, even in the linear setting. }
    \label{fig:similarities}
\end{figure}
\textbf{Approximation error}
We display the normalized approximation errors for $\alpha=1$ and $\alpha=3$ in Fig. \ref{fig:similarities}. For the linear kernel ($\alpha=1$) all methods yield a relatively small error even for low dimensionality. An error of $0.01$ is hard to see by eye when plotting input-output similarity scatter plots as done in Fig. \ref{fig:half-moons-matching}. For both $\alpha=1,3$ we observe again a strange behavior of our method: it seems to ``bottom out'' and the error stops decreasing and even begins to increase as the dimensionality increases. This may be related to unstable convergence properties of gradient descent ascent.

For $\alpha=3$ we observe that the kernel PCA method largely outperforms all methods. We obesrve that Nystr{\"o}m with either our features or KMeans appears to outperform sampled Nystr{\"o}m methods. The sampled Nystr{\"o}m method is worse than our representations for low dimensionality but eventually catches up and surpasses ours neural representations.

\begin{figure}
    \centering
    \includegraphics[width=\linewidth]{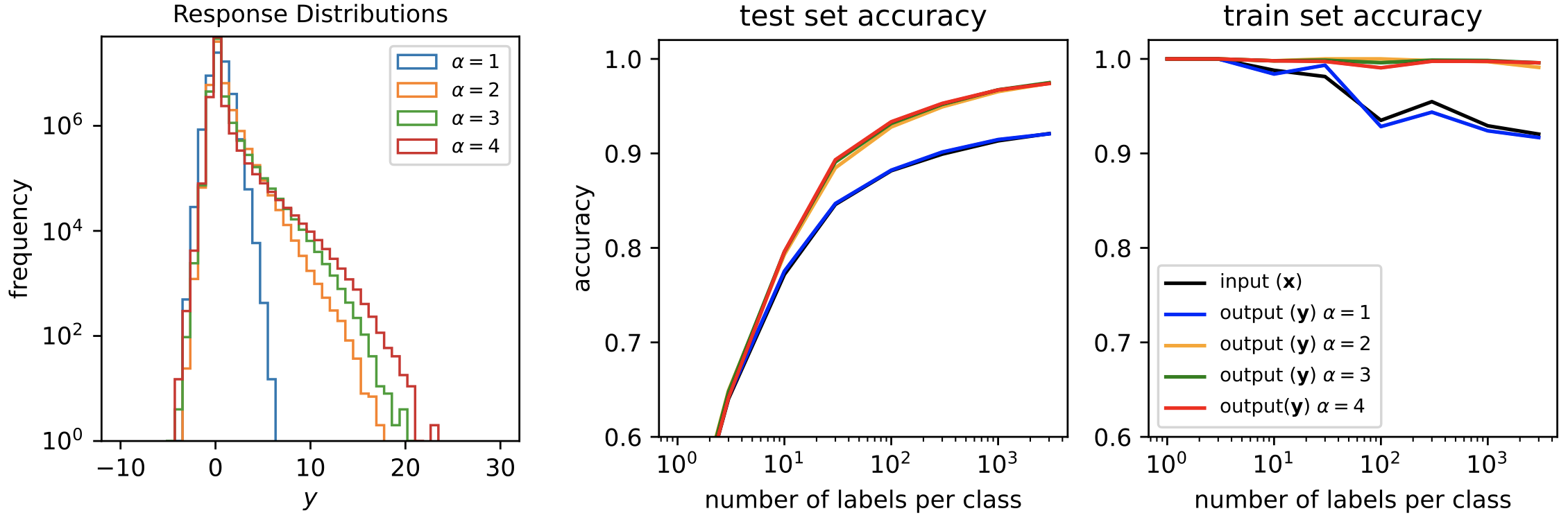}
    \caption{(a) response distribution (after the procedure we describe in the text for removing the sign degeneracy). The nonlinear kernels ($\alpha=2,3,4$) naturally give rise to sparse distributions.  (b) test set accuracy of a linear classifier classification for MNIST (c) train set accuracy of the corresponding linear classifier. Interestingly all nonlinear kernels give nearly identical train and test set results. The linear kernel gives nearly identical results to simply training the classifier directly on the pixels. }
    \label{fig:sparse_classification}
\end{figure}

\textbf{Emergence of sparse representations} We train networks with $\alpha=\{1,2,3,4\}$ and $n=800$ neurons (so the output dimensionality is exactly 2x the input dimensionality). There is a sign degeneracy when $\alpha$ is odd: we can multiply both $\mathbf{w}_i$ and $y_i$ by $-1$ and the objective is unchanged. When we look at the response histogram for single neurons, we observe that for $\alpha=3$, the response tends to be heavily skewed so that when the response has a high magnitude, it is either always positive or always negative. We remove this degeneracy by multiplying both $\mathbf{w}_i$ and $y_i$ by the sign of $\langle y_i \rangle$. After removing this degeneracy we plot the neuron responses over all patterns in Figure \ref{fig:sparse_classification}.

For $\alpha=1$ (linear neurons), neuron responses are roughly centered around zero: neuron responses are neither sparse not skewed. For $\alpha=2,3,4$, neurons appear to have a heavy tailed distribution, they frequently have small responses, but occasionally have large positive responses. Neurons become increasingly sparse and heavy tailed as we increase $\alpha$, although this effect is not that strong.

\textbf{Evaluating the representations via linear classification } We train a linear classifier on the inputs $(\mathbf{x})$ and the outputs $(\mathbf{y})$ for $\alpha=1,2,3,4$ and $n=800$. We train every configuration using $k \in \{1,3,10,30,100,300,1000,3000\}$ labels per class. We train all configuration with a weight decay parameter $\lambda \in \{1e-5,1e-4,1e-3,1e-2,1e-1,1\}$ which yields the highest test accuracy.  We average the accuracy for every configuration over 5 random seeds. The results are show in Fig. \ref{fig:sparse_classification}.

As expected, the performance of the inputs and $\alpha=1$ (linear similarity matching) is nearly identical on both test and train sets. Surprisingly, the test performance of $\alpha=2,3,4$ is nearly identical. Perhaps these curves can be partially explained by the spectra of the output similarity matrix which we show in Figure \ref{fig:eigenvalues} of the Appendix. While the shapes of the spectra are different in every case, $\alpha=1$ has roughly 200 nonzero eigenvalues while $\alpha=2,3,4$ all have nearly 800 nonzero eigenvalues. Perhaps the number of nonzero eigenvalues is more influential for the linear classification performance than the detailed shapes of these spectra.

\section{Related work}
\textbf{Kernel similarity matching with random Fourier features}
The most closely related work to ours is kernel similarity matching with random Fourier features \citep{bahroun2017neural}. The key difference between our methods is that instead of learning the features $\mathbf{w}$, they use random Fourier features to directly generate nonlinear feature vectors $\boldsymbol{\phi}^t = \sqrt{2/n} \cos(\mathbf{W} \mathbf{x}^t + \mathbf{b}) $ which they then feed into a standard linear similarity matching network. This leads them to a different architecture (one feedforward layer + one recurrent layer, instead of our single recurrent layer net) and a different set of learning rules. A benefit of the random feature approach is that it will theoretically lead to perfect matching, so long as the number of random features is sufficiently large.

However, the feature learning aspect of our algorithm naturally led to a sparse set of responses which lends our model an added degree of biological plausibility. Additionally, our method generalizes to non-shift invariant kernels and empirically it seemed that to yield better approximation error when the dimensionality of the output is not too high. Our method can be seen as a biased method for approximation, which can be useful when the dimensionality is low, but ultimately will underperform compared to non-biased methods such as random Fourier methods or Nystr{\"o}m methods.

\textbf{Nystr{\"o}m Approximation} While not obviously biological, Nystr{\"o}m methods are perhaps the most commonly used methods for approximating kernel matrices. The Nystr{\"o}m approximation uses a set of landmarks $\{\mathbf{w}_i:i=1,2,\hdots,N\}$ to construct a low rank approximation of the original kernel matrix \citep{williams2001nystrom}. To more clearly see the relationship between this method to ours, one can slightly modify the original formulation to generate ``Nystr{\"o}m features'':
\begin{equation}
    \text{``Nystr{\"o}m features'' } \;  y^t_j = \sum_{i} f(\mathbf{x}^t, \mathbf{w}_i) M_{ij}
     \text{ where } M_{ij} = 
    [(\mathbf{B}^{\dagger})^{1/2}]_{ij} \text{ and } B_{ij} = f(\mathbf{w}_i, \mathbf{w}_j)
\end{equation}
$\mathbf{B}^{\dagger}$ indicates the psuedo-inverse. Multiplying two such vectors togethers yields the Nystr{\"o}m approximation $\hat{F}_{st} = \mathbf{y}^s \cdot \mathbf{y}^t = \sum_{ij} f(\mathbf{x}^s, \mathbf{w}_i) [\mathbf{B}^{\dagger}]_{ij} f(\mathbf{x}^t, \mathbf{w}_j)$. Our method produces representations of the same functional form but our $M$ matrix is derived from parameters learned by the correlations:
\begin{equation}
     \text{Our features } \; y^t_j = \sum_{i} f(\mathbf{x}^t, \mathbf{w}_i) M_{ij}
    \text{ where } M_{ij} = 
    [\mathbf{L}+\lambda \mathbf{I}]_{ij}^{-1} q_j
\end{equation}
As measured by squared error, the Nystr{\"o}m approximation was actually a better approximation than our representations, when we used the same set of landmarks (Figs. \ref{fig:half-moons-matching}, \ref{fig:similarities}). The variation in Nystr{\"om} methods primarily come from the method used to  generate the landmarks. Two broad categories of landmark selection can be defined: template vs. pseudo-landmark.  Template based approaches choose landmarks as a subset of the inputs $\mathbf{w} \in \{\mathbf{x}_1, \mathbf{x}_w, \hdots, \mathbf{x}_T\}$ typically chosen via sampling schemes \citep{williams2001nystrom, drineas2005nystrom, musco2016recursive}. Pseudo-landmark approaches do not require the landmarks to be inputs. \citet{zhang2008improved} used the cluster centers generated by KMeans as the landmarks. \citet{fu2014landmark} formulate landmark selection as an optimization problem in the reproducing Hilbert space. Our method can be seen as a pseudo-template approach as our landmarks are directly generated via Hebbian learning rules and in general will not be exactly equal to any particular input pattern. Our method is similar in spirit to the approach of \citet{fu2014landmark}. A key difference is that we use a different objective, a correlation-based upper bound to the sum of squared errors, which gives rise to correlation-based learning rules.

\section{Discussion}
We have extended the neural random Fourier feature method of \citet{bahroun2017neural} for kernel similarity matching to instead be applicable to arbitrary differentiable kernels. Rather than using random nonlinear features, we learned the features with Hebbian learning rules. Both this work and that of \citet{bahroun2017neural} can be seen as extensions of the linear similarity matching works written in \cite{hu2014snmf, pehlevan2014nonnegative, pehlevan2015mds, pehlevan2018why}. By using a nonlinear input similarity, the representations learned by our network are capable of learning high-dimensional nonlinear functions of the input, without requiring any constraints such as non-negativity.

To our knowledge this is the first work that attempts to directly optimize the sum of squared errors (Eq. \ref{eqn:cmds}) without relying on sampling schemes or direct computation of the input similarity matrix. It would be interesting to relax the correlation-based constraint we have imposed on ourselves. This might allow for a variety of different types of bounds (Eq. \ref{eqn:inequality}) to be derived which in turn could lead to more faithful approximations than the one presented in our paper. 

A key obstacle faced by users of this algorithm is the stochastic gradient descent ascent procedure. Empirically the convergence our algorithm is quite sensitive to the learning rate choices. This method does not provide the same sorts of theoretically guarantees or empirically observed robustness of sampling based methods. Generation of more robust ascent-descent optimization methods could be useful for making this class of algorithms more accessible for the practitioner.

\section{Acknowledgements}
We would like to thank Lawrence Saul and Runzhe Yang for their helpful insights and discussions. This research was supported by the Intelligence Advanced Research Projects Activity (IARPA) via Department of Interior/ Interior Business Center (DoI/IBC) contract number D16PC0005, NIH/NIMH RF1MH117815, RF1MH123400. The U.S. Government is authorized to reproduce and distribute reprints for Governmental purposes notwithstanding any copyright annotation thereon. Disclaimer: The views and conclusions contained herein are those of the authors and should not be interpreted as necessarily representing the official policies or endorsements, either expressed or implied, of IARPA, DoI/IBC, or the U.S. Government.

\bibliographystyle{iclr2022_conference}

\appendix
\section{Appendix}

\begin{proof}[Proof of Lemma \ref{eq:KeyInequality}]
In this section we will derive a correlation-based upper bound for the nonlinear pairwise sum $\sum_{t,t'} f(\mathbf{x}_t,\mathbf{x}_{t'}) \mathbf{y}_t \cdot \mathbf{y}_{t'}$ in Equation \ref{eqn:reformulation}. The key to creating this bound will be to replace all nonlinear similarities $f(\mathbf{u}, \mathbf{v})$ with the dot product between high dimensional vectors $\boldsymbol\phi_u \cdot \boldsymbol\phi_v$. This is allowed because we have assumed that $f$ is a positive semidefinite kernel. Formally this assumption means that for any set of $M$-dimensional vectors $\{ \mathbf{w}, \mathbf{x}_1, \hdots, \mathbf{x}_T \}$, there exists a corresponding set of (at most) $T+1$-dimensional vectors $\{ \boldsymbol\phi_{\mathbf{w}}, \boldsymbol\phi_1, \hdots, \boldsymbol\phi_T \}$ whose inner products yield the similarity defined by $f$:
\begin{equation}
    \boldsymbol\phi_t \cdot \boldsymbol\phi_{t'} = f(\mathbf{x}_t, \mathbf{x}_{t'}) \text{ and } \boldsymbol\phi_t \cdot \boldsymbol\phi_{\mathbf{w}} = f(\mathbf{x}_t, \mathbf{w}) \text{ and } \boldsymbol\phi_{\mathbf{w}} \cdot \boldsymbol\phi_{\mathbf{w}} = f(\mathbf{w}, \mathbf{w}) 
    \label{eqn:hilbert-vectors}
\end{equation}
Assume we have some corresponding set of $T+1$ scalars $\{q, y_1, \hdots, y_T\}$. Now consider the vector difference $\frac{1}{T} \sum_t y_t \boldsymbol\phi_t - q \boldsymbol\phi_w$. The squared norm of this difference is of course non-negative. Additionally we can expand out this square:
\begin{equation}
    0 \leq \frac{1}{2} \left\Vert \frac{1}{T} \sum_{t} y_t \boldsymbol\phi_t - q \boldsymbol\phi_{\mathbf{w}} \right\Vert^2 = \frac{1}{2T^2} \sum_{s,t} y_s y_t \boldsymbol\phi_s \cdot \boldsymbol\phi_{t} - \frac{1}{T} \sum_t q y_t \boldsymbol\phi_t \cdot \boldsymbol\phi_{\mathbf{w}} + \frac{1}{2} q^2 \boldsymbol\phi_{\mathbf{w}} \cdot \boldsymbol\phi_{\mathbf{w}}
    \label{eqn:hilbert-norm}
\end{equation}
At this point we can simply replace all dot products with the equivalent nonlinear similarities $f(\cdot, \cdot)$ in Equation \ref{eqn:hilbert-vectors} and rearrange the terms to yield our key inequality (Eq. \ref{eqn:inequality}) which we write again:
\begin{equation}
    \frac{1}{2T^2} \sum_{s,t} y^s y^t f(\mathbf{x}^s, \mathbf{x}^t) \geq \frac{1}{T} \sum_t q y^t f(\mathbf{x}^t, \mathbf{w}) - \frac{1}{2} q^2 f(\mathbf{w}, \mathbf{w})
    \label{eqn:inequality-again}
\end{equation}
\end{proof}

\subsection{Interpretation using rank-1 Nystr{\"o}m approximation}
The bound in Equation \ref{eqn:inequality} can be interpreted using a rank-1 Nystr{\"o}m approximation for $f(\mathbf{x}_t,\mathbf{x}_{t'})$. By holding $\mathbf{w}$ fixed and maximizing for $q$ in the right hand side of Equation \ref{eqn:inequality}, we get $q^* = f(\mathbf{w}, \mathbf{w})^{\dagger} \sum_t y_t f(\mathbf{x}_t, \mathbf{w})$ where $f(\mathbf{w}, \mathbf{w})^{\dagger}$ indicates the pseudo-inverse.\footnote{The pseudo-inverse of the scalar $f(\mathbf{w}, \mathbf{w})$ acts exactly like the regular inverse except it is defined to be $0$ when $f(\mathbf{w}, \mathbf{w})$ is zero, unlike the regular inverse which would be undefined.} We can insert this optimal $q^*$ back into the right hand side to yield:

\begin{equation}
    \sum_t q^* y_t f(\mathbf{x}_t, \mathbf{w}) - \frac{1}{2} (q^*)^2 f(\mathbf{w}, \mathbf{w}) = \frac{1}{2} \sum_{t,t'} y_t f^{\text{Nystr{\"o}m}}_{t,t'} y_{t'}
\end{equation}
where we have defined they Nystr{\"o}m matrix:
\begin{equation}
    f^{\text{Nystr{\"o}m}}_{t,t'} := f(\mathbf{x}_t, \mathbf{w}) f(\mathbf{w}, \mathbf{w})^{\dagger} f(\mathbf{w}, \mathbf{x}_{t'})
\end{equation}
The matrix $f^{\text{Nystr{\"o}m}}_{t,t'}$ is a rank-1 Nystr{\"o}m approximation for the full similarity matrix $f(\mathbf{x}_t, \mathbf{x}_{t'})$ \cite{williams2001nystrom}. Note that for every dimension $i$ of the representation vector $\mathbf{y}$, we have a different landmark vector $\mathbf{w}^i$ so we are using a different rank-1 approximation of the matrix $f(\mathbf{x}_t, \mathbf{x}_{t'})$ for every to the pairwise sum $\frac{1}{2} \sum_{t,t'} y^i_t y^i_{t'} f(\mathbf{x}_t, \mathbf{x}_{t'})$.

Typically the weight vector $\mathbf{w}$ , often called a ``landmark'', used in the Nystr{\"o}m approximation is set either by setting it to a random input or by more sophistcated schemes like setting it with KMeans. In our case, we are directly optimizing the landmarks via Equation \ref{eqn:upper_bound}. To our knowledge the only other work to do this was performed in \cite{fu2014landmark}.

\subsection{PyTorch code for training}
\begin{figure}
    \centering
    \includegraphics[width=0.75\linewidth]{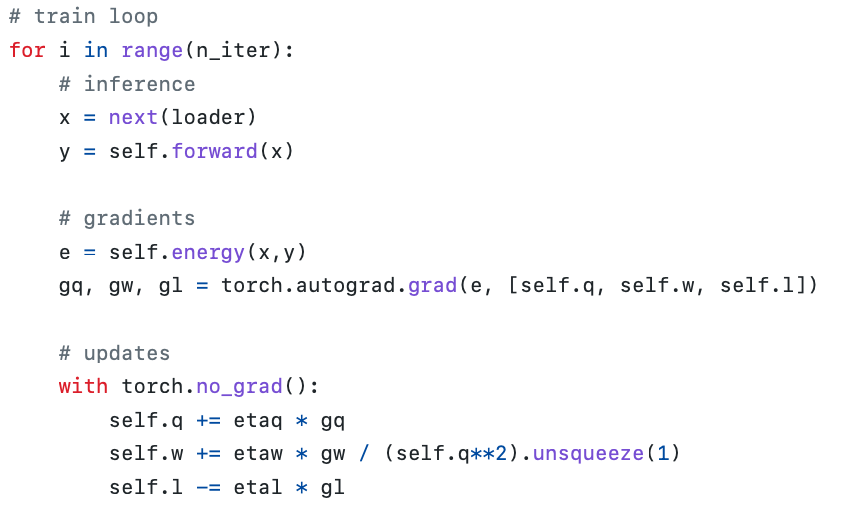}
    \caption{Training loop to perform the GDA-based optimazation of Eq. \ref{eqn:final_objective} written using PyTorch}
    \label{fig:train-loop}
\end{figure}
The code used in the main training loop of our algorithm is shown in Fig. \ref{fig:train-loop}.

\subsection{Homogeneous kernels}
\label{sec:homogeneous-kernel}
Before moving on we note that a simplification can be made  if we have a homogenous (scale free) kernel, i.e. if $f(a \mathbf{u}, b \mathbf{v}) = (ab)^\alpha f(\mathbf{u}, \mathbf{v})$. Examples of such kernels are the linear kernel $f(\mathbf{u}, \mathbf{v}) = \mathbf{u} \cdot \mathbf{v}$, homogeneous polynomial kernels $f(\mathbf{u}, \mathbf{v}) = (\mathbf{u} \cdot \mathbf{v})^\alpha$, and the cosine-based kernel we will use in one of our experiments $f(\mathbf{u}, \mathbf{v}) = \Vert \mathbf{y} \Vert \Vert \mathbf{v} \Vert (\hat{\mathbf{u}} \cdot \hat{\mathbf{v}})^\alpha$. In this case, there is a degenecary between $q_i$ and the norm of $\mathbf{w}_i$. This means we can actually eliminate the minimization over $\mathbf{q}$ by setting $q_i=1$. We prove this fact in the appendix. In the case of a homogeneous kernel, we are left with the simpler equivalent optimization:
\begin{equation}
    \min_{\mathbf{W}}  \max_{\mathbf{L}} \min_{\mathbf{Y}} -\sum_{i=1}^N \left[ \langle y_i f(\mathbf{w}_i, \mathbf{x}) \rangle - \frac{1}{2} f(\mathbf{w}_i,\mathbf{w}_i) \right] + \frac{1}{2} \sum_{i,j=1}^N \left[ L_{ij} \langle y_i y_j \rangle - \frac{1}{2} L_{ij}^2 \right] + \lambda \langle y_i^2 \rangle
    \label{eqn:scalefree-objective}
\end{equation}

\begin{equation}
    \min_{\mathbf{W}}  \max_{\mathbf{L}} \min_{\mathbf{Y}} \frac{1}{T} \sum_{t=1}^T\left[-\sum_{i=1}^N \left[ y^t_i f(\mathbf{w}_i, \mathbf{x}^t) - \frac{1}{2} f(\mathbf{w}_i,\mathbf{w}_i) \right] + \frac{1}{2} \sum_{i,j=1}^N \left[ L_{ij} \langle y_i y_j \rangle - \frac{1}{2} L_{ij}^2 \right] + \frac{\lambda}{2} \sum_{i=1}^N (y^t_i)^2 \right]
    \label{eqn:scalefree-objective}
\end{equation}
This more clearly shows the relationship between the linear similarity matching objectives and the more general kernel similarity matching objective. When $f(\mathbf{w}_i,\mathbf{x}) = \mathbf{w}_i \cdot \mathbf{x}$, we are in fact left with the exact objective studied in previous works on linear similarity matching \cite{pehlevan2018why}. This simplification can be easily implemented in code by initializing $q_i=1$ and setting the learning rate for $\eta_q$ to be 0 for all iterations.

\subsection{Proof that the bound in Equation \ref{eqn:upper_bound} is maximized when $q=1$ and $f$ is a homogeneous kernel}
Assume that $f$ is a homogenous kernel, so that $f(\lambda_1 \mathbf{u}, \lambda_2 \mathbf{v}) = (\lambda_1 \lambda_2)^{\alpha} f(\mathbf{u}, \mathbf{v})$ for any $\lambda_1, \lambda_2 > 0$. We will show that in this case we can simply set $q=1$. Assume we have some pair $q,\mathbf{w}$. Then define $q'=1$ and $\mathbf{w}' := q^{1/\alpha} \mathbf{w}$. Because our kernel is homogeneous, we have $f(\mathbf{w}', \mathbf{x}_t) = f(q^{1/\alpha} \mathbf{w},\mathbf{x}_t) = q f(\mathbf{w},\mathbf{x}_t)$ and similarly $f(\mathbf{w}',\mathbf{w}') = q^2 f(\mathbf{w},\mathbf{w})$. In other words when we have a homogenous kernel, we can always just rescale the features $\mathbf{w}' \leftarrow q^{1/\alpha} \mathbf{w}$ so the following holds for any $q$:
\begin{equation}
    \sum_t q y_t f(\mathbf{x}_t, \mathbf{w}) - \frac{1}{2} q^2 f(\mathbf{w}, \mathbf{w}) = \sum_t y_t f(\mathbf{x}_t, \mathbf{w}') - \frac{1}{2} f(\mathbf{w}', \mathbf{w}')
    \label{eqn:eliminate_q}
\end{equation}

\subsection{Methods we compare to in our experiments}
\textbf{Kernel PCA} The optimal (in terms of mean squared error) rank $N$ approximation $\hat{f}$ to the kernel matrix $f$ is given by the top $n$-dimensional subspace of the kernel matrix \citep{borg2005cmds}. Specifically, we perform an eigenvector decomposition on $f$ then set $\hat{f}$ via:
\begin{equation}
    f(\mathbf{x}_s, \mathbf{x}_t) = \sum_{i=1}^T \lambda_i \mathbf{v}_i \mathbf{v}_i^{\top} \rightarrow  \hat{f}_{st} = \sum_{i=1}^N \lambda_i \mathbf{v}_i \mathbf{v}_i^{\top}
\end{equation}
For the mnist dataset, the kernel matrix is 70k x 70k entries so we use a randomized svd algorithm to compute the top components, rather than a full SVD. We use the PyTorch implementation ``torch.svd\_lowrank'' with $q=1024+256$ (so we estimate the top 1024+256 singular values and vectors) and we set niter=4 meaning we do 4 power iterations. 

\textbf{Nystrom methods} Given a set of landmarks $\{\mathbf{w}_i:i=1,2,\hdots,N\}$, the nystrom method defines two matrices:
\begin{equation}
    A_{ti} = f(\mathbf{x}^t,\mathbf{w}_i) \;\;\;\;\;\; B_{ij} = f(\mathbf{x}_i,\mathbf{w}_j)
\end{equation}
These are used to approximate the kernel matrix via:
\begin{equation}
    \hat{f}_{st} = \sum_{ij} A_{si} [\mathbf{B}^{\dagger}]_{ij} A_{tj}^{\top}
\end{equation}
To calculate the pseudo-inverse of $\mathbf{B}$ we use double precision arithmetic and set first set all singular value of $B$ smaller than $1e-10$ to zero. We compare 3 different methods of landmark generation in our paper.

\textbf{Nystrom with uniformly sampled landmarks} This is the simplest method, and was proposed in he original paper using the Nystrom method to approximate kernel matrices \citep{williams2001nystrom}. We simply uniformly sample $N$ landmarks without replacement from the dataset.

\textbf{Nystrom with landmarks generated via KMeans} This method was used by \citet{zhang2008improved} and instead uses the cluster centers given by KMeans as the landmarks. We initialize our means with templates from the dataset and the use Lloyd's method to update our cluster centers \citep{lloyd1982least}. This is run either until convergence, or 100 iterations of the algorithm occurs, whichever happens first.

\textbf{Nystrom with landmarks generated via our method} We use the $N$ features learned with Hebbian update rules as the landmarks in thye Nystrom approximation. 

\textbf{Random Fourier features} For the half moons dataset using the Gaussian kernel, we also compare our method to the random Fourier feature method \citep{rahimi2007random}. The authors in \citet{bahroun2017neural} train a linear similarity matching on top of these features. But for simplicity, we just use the random features themselves, rather than the subsequent neurally generated features. This provides a best-case scenario for the neural random Fourier method. The neural algorithm is simply trying to matching similarities $\min{y} \Vert \mathbf{y}^s \cdot \mathbf{y}^t - \boldsymbol{\phi^s} \cdot \boldsymbol{\phi}^t \Vert$, and it should be able to provide zero error, given the same output dimension as input dimension. Although it practice it can be challenging to set the learning rates appropriately, so we evaluate $\boldsymbol{\phi}$ instead of $\mathbf{y}$ to avoid any possible issues with improper training.

To generate these features, we randomly sample $\mathbf{w}_i \sim \mathcal{N}(\text{mean} =0,\text{variance}=\frac{1}{\sigma^2}\mathbf{I})$ where and $b_i \in \text{Uniform}[0,2 \pi]$ as set the features as 
\begin{equation}
    \phi^t_i = \sqrt{\frac{2}{n}} \cos (\mathbf{w}_i \cdot \mathbf{x}^t + b_i)
\end{equation}

\subsection{Half Moons Experiment}
\subsubsection{Training details}
We train with minibatch sizes of 64 input. We train for 10000 iterations with $\eta_w=\eta_q=0.01$ and $\eta_l=0.1$. Then we anneal the learning rates by a factor of 10x and train for 10000 more iterations $\eta_w=\eta_q=0.001$ and $\eta_l=0.01$. 

\subsection{MNIST Experiment}
\subsubsection{Training details}
We train with minibatch sizes of 64.  After initialization and warmup, we set $\alpha$ and train for 10,000 iterations with $\eta_w=0.001, \eta_l=0.01$. We then decay the learning rates for $W,L$ by 10x and train for with 5k more iterations with $\eta_w=0.0001, \eta_l=0.001$. This whole procedure takes approximately 4 minutes on an NVIDIA GTX 1080 GPU.

\subsubsection{Learned features for MNIST dataset}
\begin{figure}
    \centering
    \includegraphics[width=\linewidth]{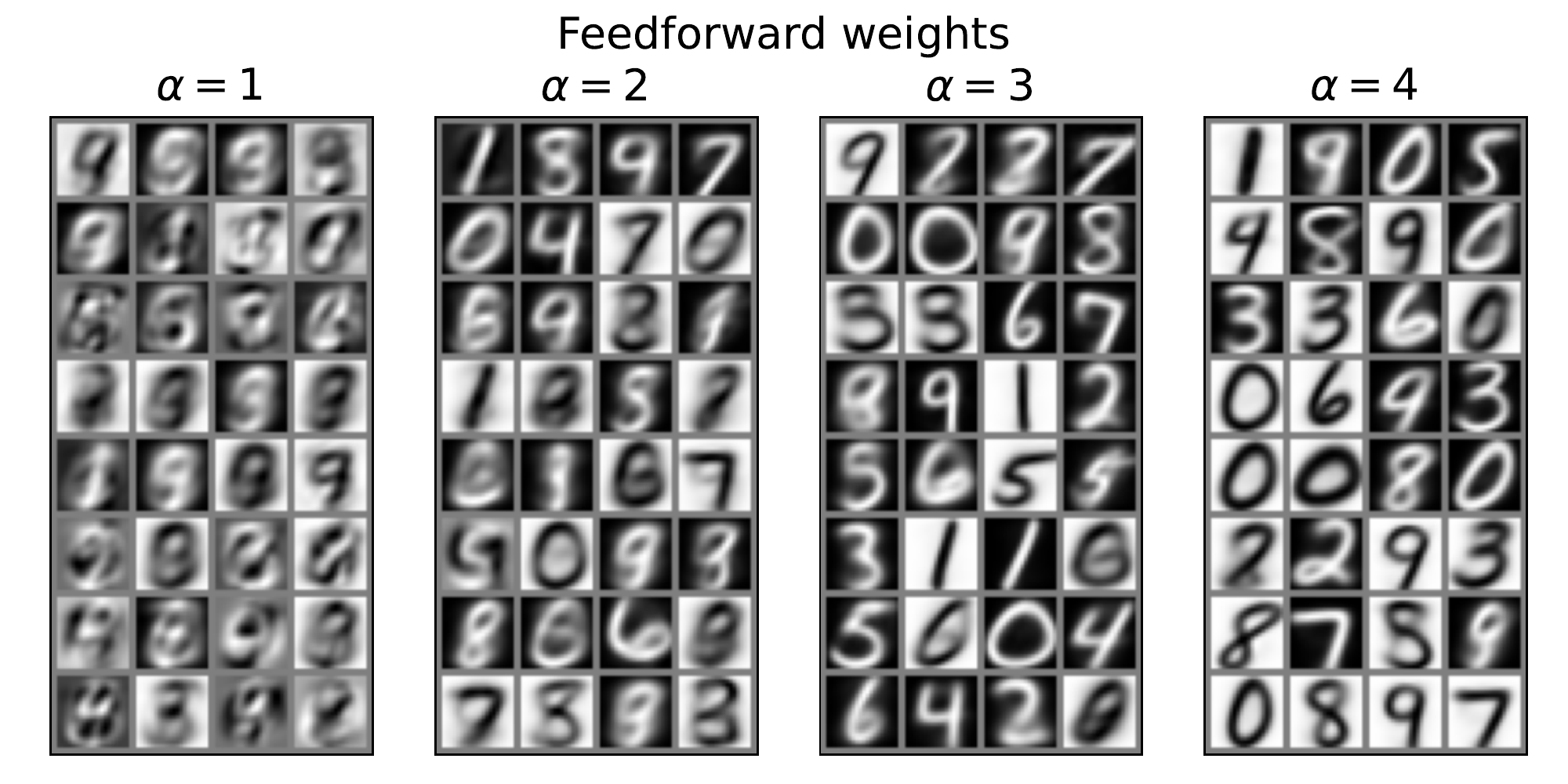}
    \caption{Feedforward weights $(\mathbf{W})$ learned by the network for $\alpha=1,2,3,4$. When $\alpha=1$, the weights appear to be complicated linear combinations of input vectors. As $\alpha$ increases, the weights begin to resemble ``templates'', i.e. whole digits. In the main text, we argue this behavior results from the increasing sharpness of neural tuning as $\alpha$ increases.}
    \label{fig:feedforward-weights}
\end{figure}
In Figure \ref{fig:feedforward-weights} we show the weights $\mathbf{w}_i$, visualized as 20x20 images, that are learned. When $\alpha=1$ (linear similarity matching), the features appear as complicated linear combinations of input digits. However, with $\alpha=2$ we see clear digits beginning to emerge. And with $\alpha=4$ nearly all the features look like whole digits. 

\subsection{Receptive field analysis (aka "linearized neuron responses")}
\begin{figure}
    \centering
    \includegraphics[width=\linewidth]{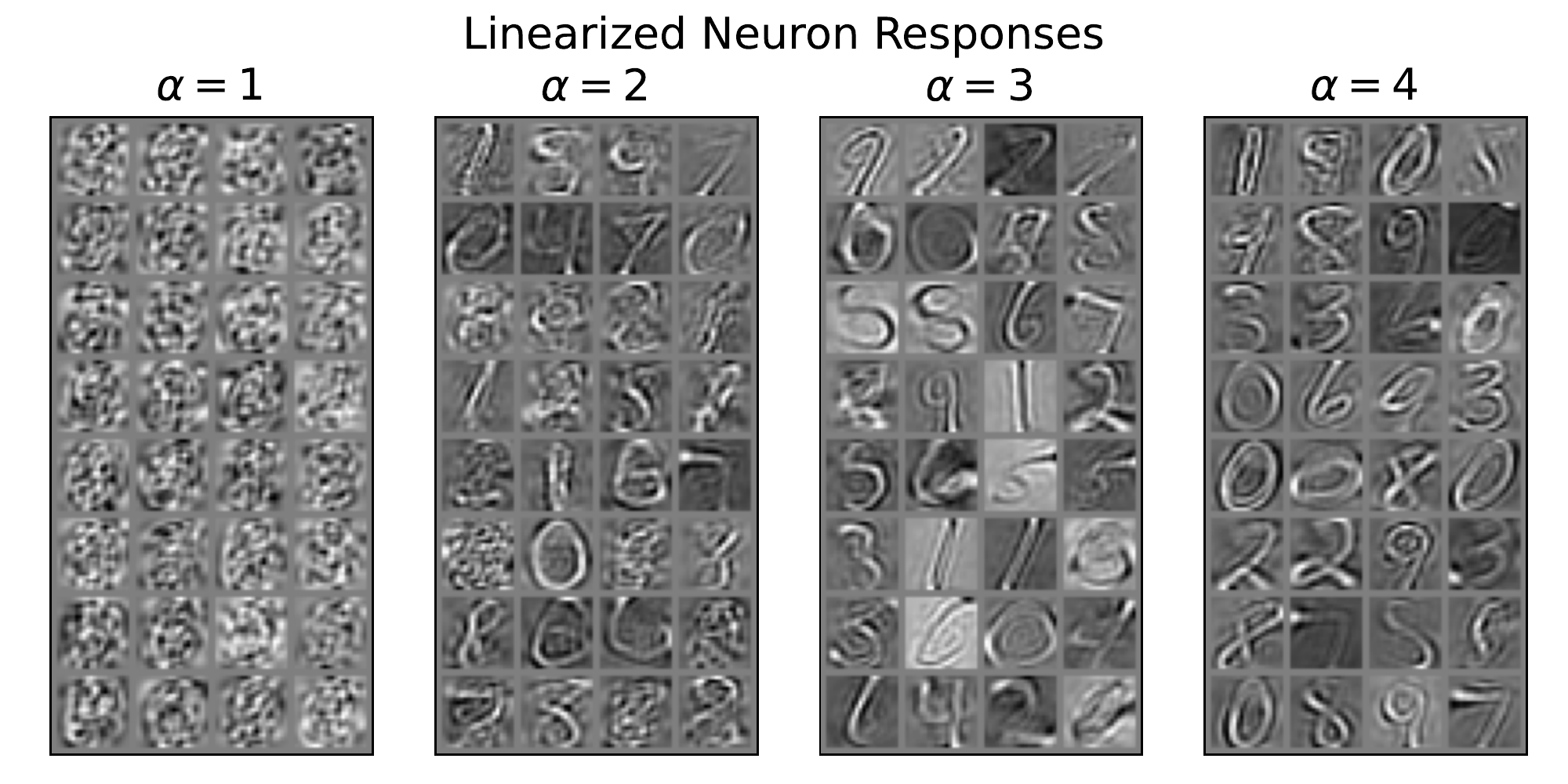}
    \caption{``Linearized responses'' for a subset of neurons from networks with $\alpha=\{1,2,3,4\}$. Specifically, for each neuron $y_i$ we compute the vector $\mathbf{s}_i = \left[ 0.1\; \mathbf{I} + \langle \mathbf{x} \mathbf{x}^{\top} \rangle\right]^{-1} \langle y_i \mathbf{x} \rangle$ and visualize $\mathbf{s}_i$ as a $20\times20$ image. As $\alpha$ increases, it appears that neurons become increasingly selective to whole input digits.}
    \label{fig:linearized-responses}
\end{figure}
A natural way to visualize what the networks learn is to examine the feedforward weights. However these visualizations are not as interpretable in this experiment as they were for the simple halfmoons dataset. In particular for $\alpha=1,2,3$ the weights appear to be a blend of templates (whole digits) and more complicated linear combinations of digits. We show some examples from each network configuration in the appendix.

We can better understand and visualize the network responses by instead examining the linearized neuron responses. Specifically, for each neuron $y_i$ we compute the vector $\mathbf{s}_i = \left[ 0.1\; \mathbf{I} + \langle \mathbf{x} \mathbf{x}^{\top} \rangle\right]^{-1} \langle y_i \mathbf{x} \rangle$. This vector can be thought of as a linear approximation to each neuron $y_i \approx \mathbf{s}_i \cdot \mathbf{x}$. We show these vectors, again visualized as images, in Figure \ref{fig:linearized-responses}. 

These linearized responses actually highlight a behavior not seen by only considering feedforward weights. We see for $\alpha=2$, it appears that many of the neurons appear to be selective for smaller regions of the input, sometimes interpretable as strokes and edges. This behavior is likely coming from some sort of cancellation between the feedforward input and lateral interactions. As $\alpha$ increases, the linear filters appear to grow in size to resemble whole digits.

For $\alpha=1$ (aka linear similarity matching) the linearized responses do not in any way appear as whole digits, rather they appear to be high spatial frequency images. This is not a failure of the networks, as the input-output similarities are nearly perfectly matched. This behavior results from the fact that linearity is not enough to encourage parts or whole templates to be learned.

\subsubsection{Spectral analysis of the representations}
\begin{figure}
    \centering
    \includegraphics[width=0.35\linewidth]{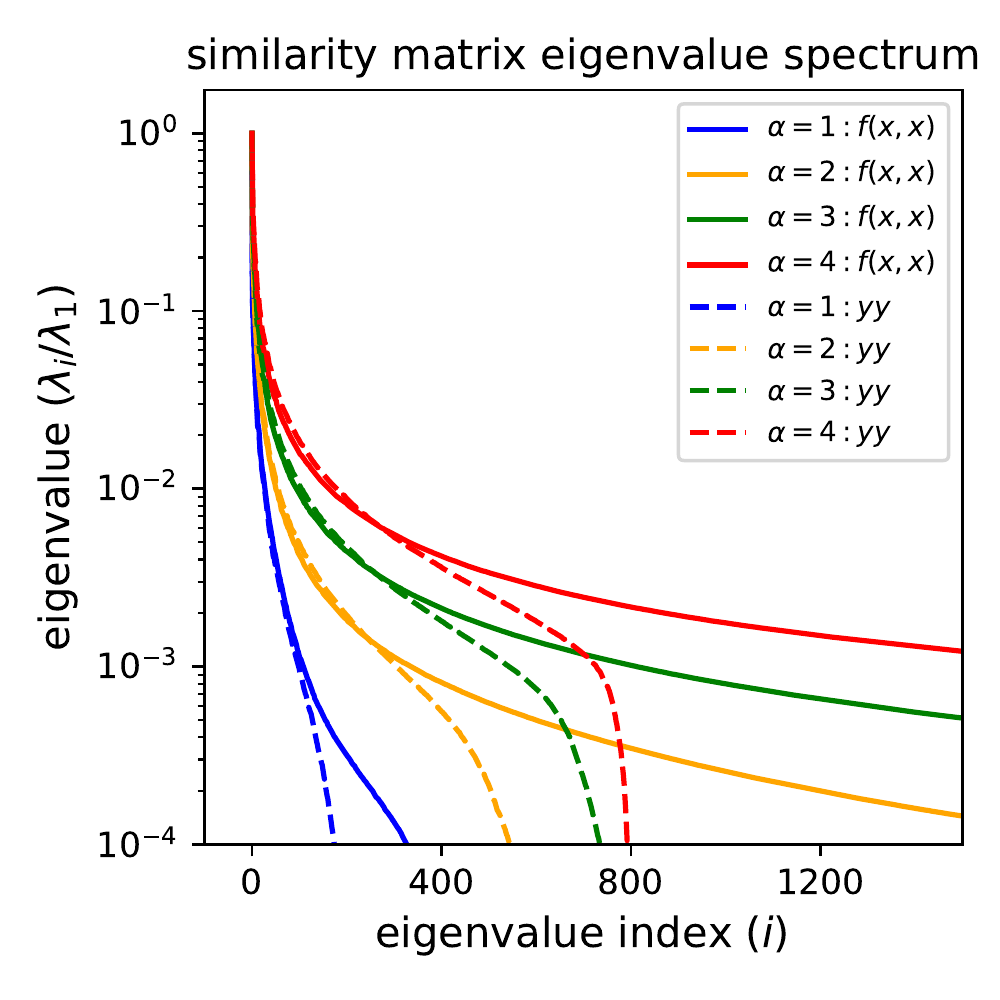}
    \caption{Eigenvalue spectrum of the input similarity matrix $f(\mathbf{x}_t, \mathbf{x}_{t'})$ and learned output similarity matrix $\mathbf{y}_t \cdot \mathbf{y}_{t'}$. If similarity matching were optimal, (i.e. we just performed uncentered kernel pca on the input similarity matrix) the largest 800 eigenvalues would be exactly matched and subsequent eigenvalues would be zero. We see that increasing $\alpha$ brings up the tails of the spectrum, approximately "whitening" the responses. For $\alpha=1$, because the inputs are 400 dimensional, the spectrum only has at most 400 nonzero eigenvalues.}
    \label{fig:eigenvalues}
\end{figure}
We examine the eigenvalue spectrum of the input similarity matrix $f(\mathbf{x}_t, \mathbf{x}_{t'})$ and the output similarity matrix $\mathbf{y}_t \cdot \mathbf{y}_{t'}$. We plot these spectra in Figure \ref{fig:eigenvalues}. Note that we normalize the spectra by dividing by the largest eigenvalue.

Without even considering the output representations, we can already observe interesting behavior just by considering the spectrum of the input similarity matrix. As we increase $\alpha$, the ``sharpness'' of the kernel, the spectrum of $f$ tends to flatten out. The effective rank of this matrix increases with increasing kernel sharpness. This observation is is in part a motivation for kernel similarity matching. Matching a high rank matrix naturally requires high dimensional vectors. This increase in dimensionality may be useful for downstream tasks such as linear classification. It is also an important part of brain inspired modeling to use overcomplete representations of the input \cite{olshausen1997overcomplete}.

For $\alpha=1$, the spectrum of $\mathbf{y}_t \cdot \mathbf{y}_{t'}$ closely matches the spectrum $f(\mathbf{x}_t, \mathbf{x}_{t'})$ for the larger eigenvalues. However, it appears to fall off for smaller eigenvalues. This may be due in part to the training not being fully converged. For $\alpha>1$, the spectrum of $\mathbf{y}_t \cdot \mathbf{y}_{t'}$ approximately matches for the larger eigenvalues (although not perfectly). However again the spectrum tends to fall off more rapidly for the learned representations than for the input similarity matrix. Note that because the dimensionality of $\mathbf{y}$ is 800 for all experiments, the spectrum necessarily must be zero for all eigenvalues smaller than the 800th largest eigenvalue.

\end{document}